\pdfoutput=1

\documentclass[11pt]{article}

\usepackage{acl}

\usepackage{times}
\usepackage{latexsym}
\usepackage{microtype}

\usepackage{dirtytalk} %
\usepackage{booktabs}
\usepackage{amsfonts}
\usepackage{amsmath}
\usepackage{amsthm}
\usepackage{enumitem}
\usepackage{makecell}
\usepackage{diagbox}
\usepackage{multirow}
\usepackage{cleveref}
\usepackage{subcaption}
\usepackage{bm}

\usepackage{thmtools,thm-restate}

\usepackage{bbm}

\usepackage{xcolor}
\usepackage{soul}
\newcommand{\hlc}[2][yellow]{{\sethlcolor{#1}\hl{#2}}}

\usepackage{algorithm, algorithmicx}

\usepackage[noend]{algpseudocode}%

\DeclareMathOperator*{\argmax}{arg\,max}

\crefname{example}{example}{examples}
\Crefname{example}{Example}{Examples}
\DeclareCaptionType{example}[Example][List of examples]
\crefname{algorithmB}{algorithm}{algorithms}
\Crefname{algorithmB}{Algorithm}{Algorithms}
\DeclareCaptionType{algorithmB}[Algorithm][List of algorithms]

\usepackage[pdftex]{graphicx}

\usepackage[T1]{fontenc}

\usepackage[utf8]{inputenc}

\usepackage{microtype}

\usepackage{inconsolata}

\usepackage{graphicx}
\usepackage{todonotes}

\definecolor{TodoColor}{HTML}{F0A0F0}

\title{Distributional Properties of Subword Regularization}

\author{Marco Cognetta \\
  Tokyo Institute of Technology \\
  \href{mailto:cognetta.marco@gmail.com}{\texttt{cognetta.marco@gmail.com}}
  \And
  Vilém Zouhar \\
  ETH Zürich \\
  \href{mailto:vzouhar@inf.ethz.ch}{\texttt{vzouhar@inf.ethz.ch}}
  \And
  Naoaki Okazaki \\
  Tokyo Institute of Technology \\
  \href{mailto:okazaki@c.titech.ac.jp}{\texttt{okazaki@c.titech.ac.jp}} 
}

\begin{document}
\maketitle

\begin{abstract}
Subword regularization, used widely in NLP, improves model performance by reducing the dependency on exact tokenizations, augmenting the training corpus, and exposing the model to more unique contexts during training. BPE and MaxMatch, two popular subword tokenization schemes, have stochastic \textit{dropout} regularization variants.
However, there has not been an analysis of the distributions formed by them.
We show that these stochastic variants are heavily biased towards a small set of tokenizations per word. 
If the benefits of subword regularization are as mentioned, we hypothesize that biasedness artificially limits the effectiveness of these schemes.
Thus, we propose an algorithm to uniformly sample tokenizations that we use as a drop-in replacement for the stochastic aspects of existing tokenizers, and find that it improves machine translation quality.%

\end{abstract}

\smallskip

\section{Introduction}

Tokenization is the first stage in almost all natural language processing pipelines, where raw text is transformed into a format that is understood by the model.
Modern neural models use \textit{subword} tokenization, which represents text as a sequence of subword units drawn from a subword vocabulary (e.g., \texttt{decompositional} $\rightarrow$ \texttt{de composition al}). 
Popular subword tokenization schemes are BPE \cite{sennrich-etal-2016-neural}, MaxMatch/WordPiece \cite{wu2016googles}, and UnigramLM \cite{kudo-2018-subword}.
Unintentionally, the downstream models are thus not conditioned on the raw text, but rather the \textit{exact tokenization} of the text.
During training, subword regularization (where static tokenizations are replaced with sampled tokenizations) is often used to break the dependency on the exact tokenization.
It also serves as data augmentation, and improves performance in a variety of downstream tasks.

\begin{table}[t]
\renewcommand\arraystretch{0.75}
\centering
\resizebox{\linewidth}{!}{
\begin{tabular}{>{\hspace{-1mm}\small}l<{\hspace{-1mm}}>{\hspace{-2mm}\small}r}
\toprule
\multicolumn{2}{c}{\small \bf \makecell{BPE-Dropout $p = 0.1$}} \\
\midrule
\texttt{to ken ization}  & 97.77\% \\
\texttt{to ke n ization}         & 1.89\% \\
\texttt{to k en ization}         & 0.25\% \\
\texttt{to ken iz ation}         & 0.04\% \\
\texttt{t oken ization}  & 0.03\% \\
\texttt{to k en iz ation}        & 0.01\% \\
\texttt{to ke n iz ation}        & 0.01\% \\
\texttt{to ken i z ation}        & <~0.01\% \\
\bottomrule
\end{tabular}
\begin{tabular}{>{\hspace{-1mm}\small}l<{\hspace{-1mm}}>{\hspace{-2mm}\small}r}
\toprule
\multicolumn{2}{c}{\small \bf \makecell{MaxMatch-Dropout $p = 0.3$}} \\
\midrule
\texttt{to ken ization}  & 34.29\% \\
\texttt{t oken ization}  & 14.66\% \\
\texttt{to ke n ization}         & 10.48\% \\
\texttt{to ken iz ation}         & 7.21\% \\
\texttt{t oke n ization}         & 4.39\% \\
\texttt{to k en ization}         & 3.15\% \\
\texttt{t oken iz ation}         & 3.05\% \\
\texttt{to ke n iz ation}        & 2.14\% \\
\bottomrule
\end{tabular}
}
\vspace{-2mm}
\captionof{example}{
The most frequently observed tokenizations of the word \texttt{tokenization} and their empirical frequencies with BPE-Dropout and MaxMatch-Dropout.
}
\label{ex:bpe_dropout_20}
\vspace{-5mm}
\end{table}

There are two main types of stochastic tokenizers: those which learn a distribution from text (e.g., UnigramLM) and those which inject randomness by corrupting the tokenization scheme (e.g., BPE-Dropout, \citealp{provilkov-etal-2020-bpe} and MaxMatch-Dropout, \citealp{hiraoka-2022-maxmatch}).
In our work, we focus on the latter, for which no prior study of the resulting distributions exists.
BPE- and MaxMatch-Dropout add randomness \textit{post hoc} into the underlying deterministic tokenization, and the distributions they produce are essentially unrelated to the text distribution.
We find that these distributions are heavily biased, in that they do not produce uniform tokenization distributions (see \Cref{ex:bpe_dropout_20}).

Despite them working well in practice, there is no reason to believe that the distributions formed by BPE- and MaxMatch-Dropout are ``good'' for training.
However, there are reasons to believe that a different strategy, uniform sampling, would be \textit{better} for training, as it would increase the amount of regularization and augmentation injected into the training process.
We experiment with replacing the stochastic aspects of BPE- and MaxMatch-Dropout with one which samples uniformly at random from all possible tokenizations, and find that it improves modeling quality on several translation tasks.

\section{Motivation} \label{sec:motivation}

Though stochastic tokenization is known to improve model quality, it remains unclear which tokenization distribution is the best. BPE- and MaxMatch-Dropout, which induce \textit{unlearned} distributions (probabilities that are not chosen by a learning algorithm), are a natural way of injecting randomness into the underlying tokenization algorithm \textit{post hoc}. However, empirically, we see that they both induce heavily \textit{biased} distributions, and hypothesize that an \textit{unbiased} stochastic tokenizer would be universally better.
This hypothesis is based on three explanations for subword regularization's effectiveness:

\noindent
\textbf{1) Regularization}
Subword regularization regularizes the model by breaking the dependency on a single, canonical tokenization.
As shown in \Cref{ex:bpe_dropout_20}, BPE- and MaxMatch-Dropout allocate most of their probability mass to only a few tokenizations for a given input.
A tokenizer that uniformly samples from the distribution will expose the model to a greater variety of unique tokenizations of the same input text during training.

\noindent
\textbf{2) Augmentation}
Subword regularization acts as data augmentation by increasing the number of unique inputs that are seen during model training.
An unbiased tokenization sampler will produce more unique tokenizations of the same input than a biased sampler.

\noindent
\textbf{3) Efficiency}
Subword regularization increases the tokenizer's \textit{efficiency} in the information-theoretic sense,\footnote{Rényi efficiency is defined as
$H_{\alpha}(p_{\mathcal{V}})/{\log(|\mathcal{V}|)}$,
where $H_\alpha$ is Rényi entropy, $\mathcal{V}$ is a subword vocabulary, and $p_{\mathcal{V}}(w)$ is the unigram probability of subword in the tokenized  corpus.
}
which is a quality shown to be well correlated with downstream task performance \cite{gutierrez-vasques-etal-2021-characters,zouhar-etal-2023-tokenization}.
A tokenizer with unbiased sampling will generally have higher efficiency than a biased one.

\section{Subword Tokenization}

Subword tokenizers are typically deterministic in that the same character sequence will result in the same tokenized output sequence.
\textit{Stochastic} variants were developed to allow for sampling tokenizations, which has been shown to improve model quality and robustness in a variety of NLP tasks.
We briefly introduce three common subword tokenization schemes and their stochastic variants, which all share the same formalization.

\paragraph{Formalization.}
Deterministic tokenization maps words $w \in \Sigma^+$ to a sequence of subwords from a finite vocabulary $\Sigma \subseteq \mathcal{V} \subset \Sigma^*$ as $t(w) \in \mathcal{V}^+$.
An important part of tokenization is that it is lossless---a tokenization of an input can be inverted to recover the original word.
For example, $t(\texttt{tokenization})$ = \texttt{to ken ization} but importantly $t^{-1}(\texttt{to ken ization})$ = \texttt{tokenization}.

In contrast, stochastic tokenization is not a one-to-one mapping, but rather a probability distribution function $T_w$ for each word $w$.
This assigns each tokenization $\bar{w}$ a probability $T_w(\bar{w}) \in [0,1]$.
Continuing \Cref{ex:bpe_dropout_20}, $T_w(\texttt{to ken ization})$ = 0.98 and $T_w(\texttt{to ke n ization})$ = 0.02.
During application of the tokenizer, the specific tokenization of $w$ is sampled from the distribution $T_w$.

\paragraph{BPE and Dropout.}
BPE forms a vocabulary by iteratively merging the most frequently cooccurring pair of tokens in the corpus\footnote{Appendix \ref{apx:algorithms}, \Cref{alg:bpe_initialization}.} \cite{sennrich-etal-2016-neural}.
During inference, the sequence of learned merges is applied greedily to new text (Algorithm \ref{alg:bpe_inference}).
To implement dropout, a probability $p$ is introduced, and the \hlc[pink!50]{highlighted statement} randomly removes candidate merges \cite{provilkov-etal-2020-bpe}.

\begin{figure}[t]
{\small
\hrule \vspace{1mm}
\textbf{Inputs}: Word $w \in \Sigma^+$, (Ordered) Merges $\mu$\newline
\textbf{Output}: Tokenized sequence $t \in \mathcal{V}^+$
\hrule
\begin{algorithmic}[1]
\State $\varphi \gets \langle (w_i, w_{i+1}) | (w_i, w_{i+1}) \in \mu~$~\hlc[pink!50]{$\wedge~\textsc{Rand}() > p$}$ \rangle$
\For{$\varphi \ne \emptyset$}
    \State $(x, y) \gets \argmax_{\mu} \varphi$ \Comment{Ordered by $\mu$}
    \State $w \gets \textsc{Replace}((x, y) \rightarrow xy, w)$
    \State $\varphi \gets \langle (w_i, w_{i+1}) |~(w_i, w_{i+1}) \in \mu~$~\hlc[pink!50]{$\wedge~\textsc{Rand}() > p$}$ \rangle$
\EndFor
\State \Return $w$\label{alg:inference_return}
\end{algorithmic}
}
\hrule
\vspace{1mm}
\captionof{algorithmB}{%
    BPE Inference (\hlc[pink!50]{with dropout})
    }
\label{alg:bpe_inference}
\vspace{-7mm}
\end{figure}

\paragraph{MaxMatch and Dropout.}
Given a subword vocabulary,\footnote{We use the standard WordPiece training algorithm as described by \cite{SchusterN12}.} MaxMatch tokenizes text from left to right by iteratively selecting the longest matching subword, shown in \Cref{alg:maxmatch}.
MaxMatch-Dropout randomly discards matching subwords and falls back to shorter ones via \hlc[pink!50]{the condition} on Line \ref{line:maxmatch_dropout_statement} \cite{hiraoka-2022-maxmatch}.

\begin{figure}[t]
{
\small
\hrule \vspace{1mm}
\textbf{Inputs}: Word $w \in \Sigma^+$, Vocabulary $\mathcal{V}$\newline
\textbf{Output}: Tokenized sequence $t \in \mathcal{V}^+$
\hrule    
\begin{algorithmic}[1]
\State $t \gets \bm{[}~\bm{]}, i \gets 1$
\While{$i \le |w|$}
    \State $z \gets w_i$
    \For{$j \in 1 \dots \max_{v \in \mathcal{V}} |v|$} 
        \If{$w_{i \,\ldots\, i+j} \in \mathcal{V}~$\hlc[pink!50]{$\wedge~\textsc{Rand}() > p$}} \label{line:maxmatch_dropout_statement}
            \State $z \gets w_{i \,\ldots\, i+j}$
        \EndIf
    \EndFor
    \State $t \xleftarrow[]{\text{append}}z,~i \gets i+|z|$
\EndWhile
\State \Return $t$
\end{algorithmic}
\hrule
}
\vspace{1mm}
\captionof{algorithmB}{%
MaxMatch Inference (\hlc[pink!50]{with dropout})    
}
\label{alg:maxmatch}
\vspace{-7mm}
\end{figure}

\paragraph{UnigramLM.}
UnigramLM \cite{kudo-2018-subword} introduced the concept of subword regularization.
It learns a vocabulary and unigram probabilities for each token in the vocabulary according to some loss function over the training corpus.
The tokenization of an input text is sampled from the probability distribution induced by the model and some temperature $\alpha$.\footnote{Note that $\alpha = 0$ yields the uniform distribution, but this would sample a tokenization of the entire sentence, rather than word-by-word, which harms model quality.} 
We do not explore the distributional properties of UnigramLM here, since they are highly corpus dependant.

\paragraph{Dropout Distributions.}
\Cref{ex:bpe_dropout_20} and \Cref{apx:tokenizations} show empirical probabilities for certain values of $p$ in BPE- and MaxMatch-Dropout.
While BPE- and MaxMatch-Dropout were not designed to form (or even claimed to be) unbiased distributions, here we concretely show that their distributions are biased, under mild conditions.

\begin{restatable}{lemma}{bpelemma}\label{lma:bpe}
Let $\mathcal{B} = (\mathcal{V}, \mu)$ be a BPE tokenizer such that there exists $(a, b), (b, b), (b, c) \in \mu$ with $(a, b) >_{\mu} (b, b) >_{\mu} (b, c)$ and $abb, bbc, abbc \notin \mathcal{V}$. Then, there exists a word $w \in \Sigma^+$ for which the distribution of the dropout tokenizer $\mathcal{B}'(w)$ is non-uniform for any $p$.
\end{restatable}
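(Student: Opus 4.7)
The natural witness is $w = abbc$. The plan is to enumerate the reachable tokenizations of $w$, derive each of their probabilities in closed form as a function of $q = 1 - p$, and then exhibit two tokenizations whose probabilities cannot simultaneously be equal for any $p \in [0,1]$.

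First I would use the vocabulary restrictions to cull the merge set. Since each BPE merge $(x, y)$ deposits $xy$ into $\mathcal{V}$, the hypotheses $abb, bbc, abbc \notin \mathcal{V}$ imply that none of $(ab, b)$, $(a, bb)$, $(b, bc)$, $(bb, c)$, or $(ab, bc)$ lies in $\mu$. Combined with the three given merges, a short case analysis on which merges ever fire shows that the only tokenizations of $abbc$ reachable by \Cref{alg:bpe_inference} are the five sequences
\[
[a, b, b, c],\ [ab, b, c],\ [a, bb, c],\ [a, b, bc],\ [ab, bc].
\]

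Next I would trace the dropout procedure iteration by iteration, exploiting the priority order $(a, b) >_\mu (b, b) >_\mu (b, c)$: the highest-priority surviving candidate in $\varphi$ is always the one applied. Conditioning on which merges survive at each iteration yields closed forms for each of the five probabilities; the decisive pair is
\[
\Pr\bigl[[ab, b, c]\bigr] = q(1 - q), \qquad \Pr\bigl[[a, b, bc]\bigr] = q(1 - q)^3.
\]
These can be equal only when $q(1 - q)\bigl(1 - (1 - q)^2\bigr) = 0$, i.e., $q \in \{0, 1\}$. At those two endpoints $\mathcal{B}'$ degenerates to a deterministic tokenizer (outputting $[ab, bc]$ or $[a, b, b, c]$, respectively), whose distribution over the five reachable tokenizations is itself trivially non-uniform.

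The main obstacle will be the combinatorial bookkeeping: the dropout mask is resampled at every iteration of \Cref{alg:bpe_inference}, so the probability of a given output is a sum over all multi-iteration merge sequences that produce it, each contributing a product of independent Bernoulli events. It is easy either to double-count a path or to overlook one. Careful use of the forbidden-subword restrictions, which aggressively prune $\varphi$ after the first merge so that at most one candidate remains in any subsequent iteration, is what keeps the enumeration tractable and ensures the two displayed probabilities are exactly what they claim to be.
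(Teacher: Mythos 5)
Your proof is correct and follows essentially the same route as the paper: the witness is $w=abbc$, the same five tokenizations are enumerated, and uniformity is refuted by showing that the closed-form probabilities cannot all equal $\tfrac15$ for any $p$. The one substantive difference is in the closed forms themselves: you re-flip the dropout coin for $(a,b)$ after $(b,c)$ has fired, giving $\Pr\bigl[[a,b,bc]\bigr]=p^3(1-p)$ and $\Pr\bigl[[ab,bc]\bigr]=(1-p)^2(1+p^2)$, whereas the paper treats a dropped merge as dropped for the remainder of the derivation and obtains $p^2(1-p)$ and $(1-p)^2$; your reading is the one consistent with the per-iteration re-sampling on lines 1 and 5 of Algorithm~\ref{alg:bpe_inference}, and the non-uniformity conclusion survives under either semantics. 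You also dispatch the degenerate endpoints $p\in\{0,1\}$ explicitly, which the paper leaves implicit.
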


\begin{restatable}{lemma}{mmlemma}\label{lma:mm}
Let $\mathcal{M}$ be a MaxMatch tokenizer over vocabulary $\mathcal{V}$, such that $\Sigma \subset \mathcal{V}$ and there exists a token $v \in \mathcal{V} \setminus \Sigma$ which is a proper prefix of some other token $z = vy \in \mathcal{V}$. Then, there exists a word $w \in \Sigma^+$ for which the distribution of the dropout tokenizer $\mathcal{M}'(w)$ is non-uniform for any $p$.
\end{restatable}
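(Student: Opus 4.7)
The plan is to set $w := z = vy$, which lies in $\Sigma^+$ and has length at least $3$ since $v \in \mathcal{V}\setminus\Sigma$ forces $|v|\ge 2$, and to show that $\mathcal{M}'(w)$ is non-uniform over the valid tokenizations of $w$ for every $p\in[0,1]$. Two valid tokenizations I will use throughout are $T_z := [z]$ (the whole word as one token) and $T_c := [w_1,\dots,w_{|w|}]$ (all single characters, valid because $\Sigma\subset\mathcal{V}$).

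First I would compute $P(T_z)$ directly from Algorithm~\ref{alg:maxmatch}. At position $1$, the inner loop overwrites the working variable with every non-dropped match from $\mathcal{V}$, and the longest prefix of $w$ in $\mathcal{V}$ is $w = z$ itself; hence the first emitted token equals $z$ exactly when $z$ survives its dropout check, giving $P(T_z) = 1-p$ regardless of what happens with the shorter prefixes. Next I would upper-bound the probability that the first emitted token is $v$: this event requires $v$ not being dropped \emph{and} every strictly longer prefix of $w$ in $\mathcal{V}$ being dropped; because $z$ is one such longer prefix by hypothesis, $P(\text{first token}=v) \le (1-p)p$. For $p \in (0,1)$ this event has positive probability, and conditioned on it the all-singleton continuation $T_2 := [v, y_1, \dots, y_{|y|}]$ has positive conditional probability (equal to a finite power of $p$), so $0 < P(T_2) \le (1-p)p < 1-p = P(T_z)$, which establishes non-uniformity on the open interval.

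For the endpoints I would compare $T_z$ with $T_c$ by direct inspection of the sampler: at $p = 0$ no dropout occurs, Algorithm~\ref{alg:maxmatch} reduces to ordinary MaxMatch and deterministically emits $[z]$, so $P(T_z) = 1$ and $P(T_c) = 0$; at $p = 1$ every dropout check fails at every position, so no match beyond the initial singleton is ever committed and the output is always $T_c$, giving $P(T_c) = 1$ and $P(T_z) = 0$. The main obstacle I anticipate is that the hypothesis of the lemma does not constrain which other prefixes of $w$ happen to lie in $\mathcal{V}$, ruling out an exact formula for $P(\text{first token}=v)$; the workaround is to argue only at the level of the inequality $P(\text{first token}=v) \le (1-p)p$, which uses only the one guaranteed longer prefix $z$ and is already strictly below $1-p$ on $(0,1)$, separating $P(T_2)$ from $P(T_z)$ as required.
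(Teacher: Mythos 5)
Your proof is correct and takes essentially the same route as the paper's: both pick the witness word $w=z$, observe $P([z])=1-p$, and contrast it with the probability of a tokenization beginning with $v$, which carries an extra factor of $p$ from the forced drop of $z$. If anything, your version is slightly more careful than the paper's, since you replace the exact formula $(1-p)p^k$ (which silently assumes which other prefixes lie in $\mathcal{V}$) by the bound $P(\text{first token}=v)\le(1-p)p$ and you treat the endpoints $p=0$ and $p=1$ explicitly.
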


\begin{figure}[htbp]
    \centering
    \begin{subfigure}{\columnwidth}
        \centering
        \vspace{-5mm}
        \includegraphics[width=\columnwidth]{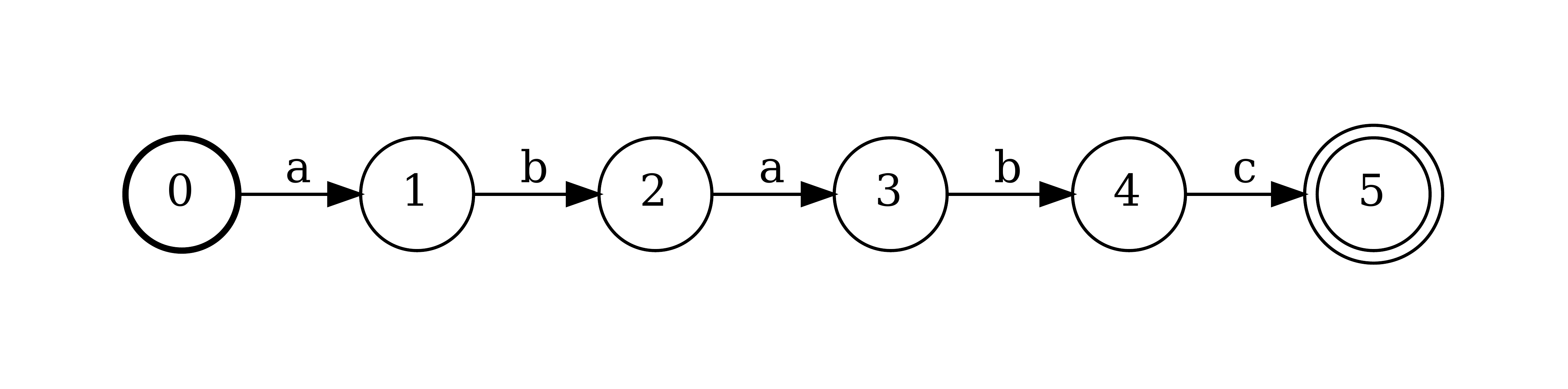}
        \vspace{-10mm}
        \caption{An automaton $\mathcal{A}$ representing \texttt{ababc}.}\label{fig:input_automaton}
    \end{subfigure}
    \begin{subfigure}{\columnwidth}
        \centering
        \includegraphics[scale=0.1]{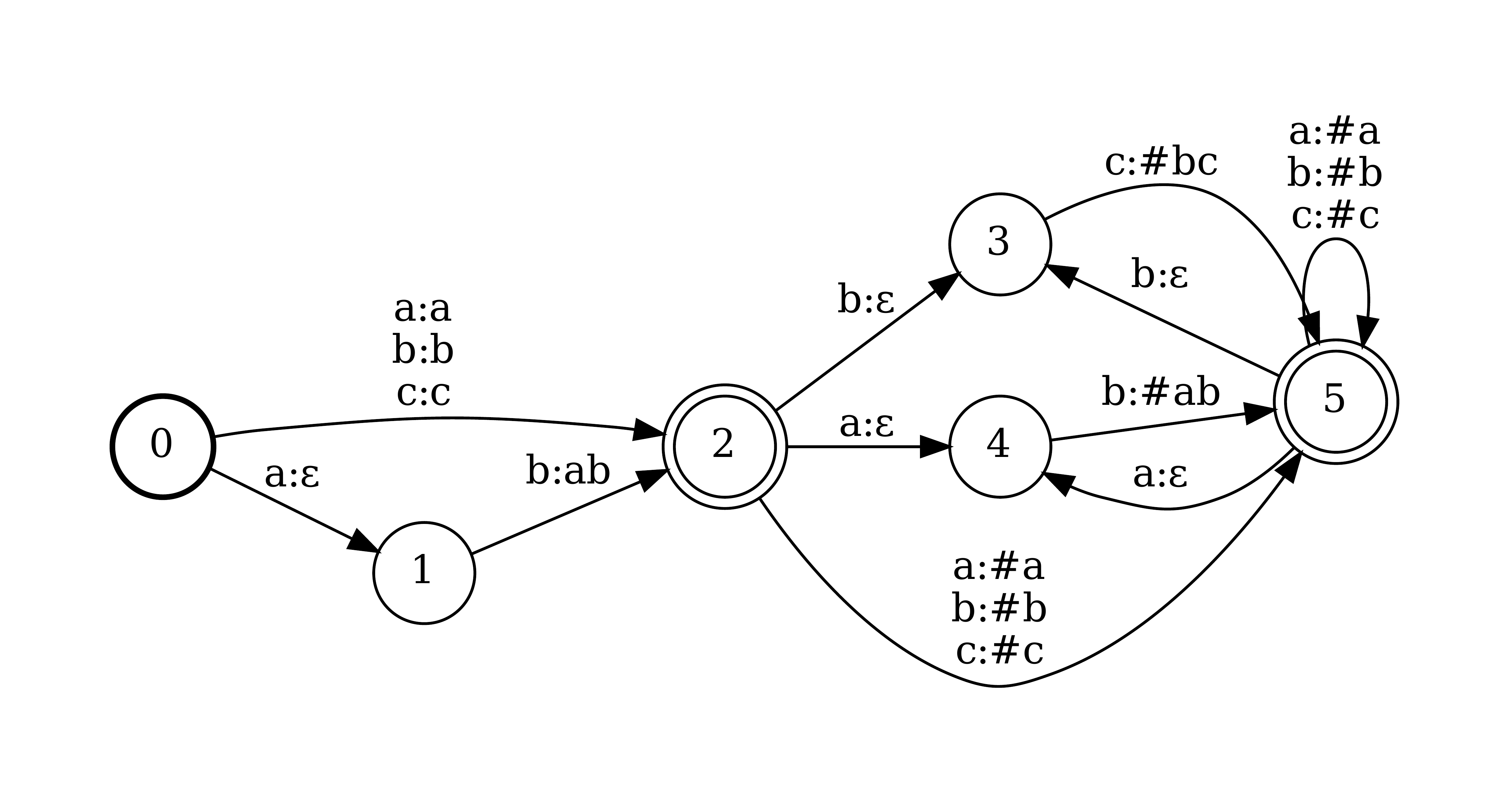}
        \captionsetup{justification=centering}
        \vspace{-10mm}
        \caption{A transducer $\mathcal{T}$ for the subword vocabulary \\ \texttt{ \{a, b, c, ab, \#a, \#b, \#c, \#ab, \#bc\}}.}\label{fig:transducer}
    \end{subfigure}
    \begin{subfigure}{\columnwidth}
        \centering
        \includegraphics[width=\columnwidth]{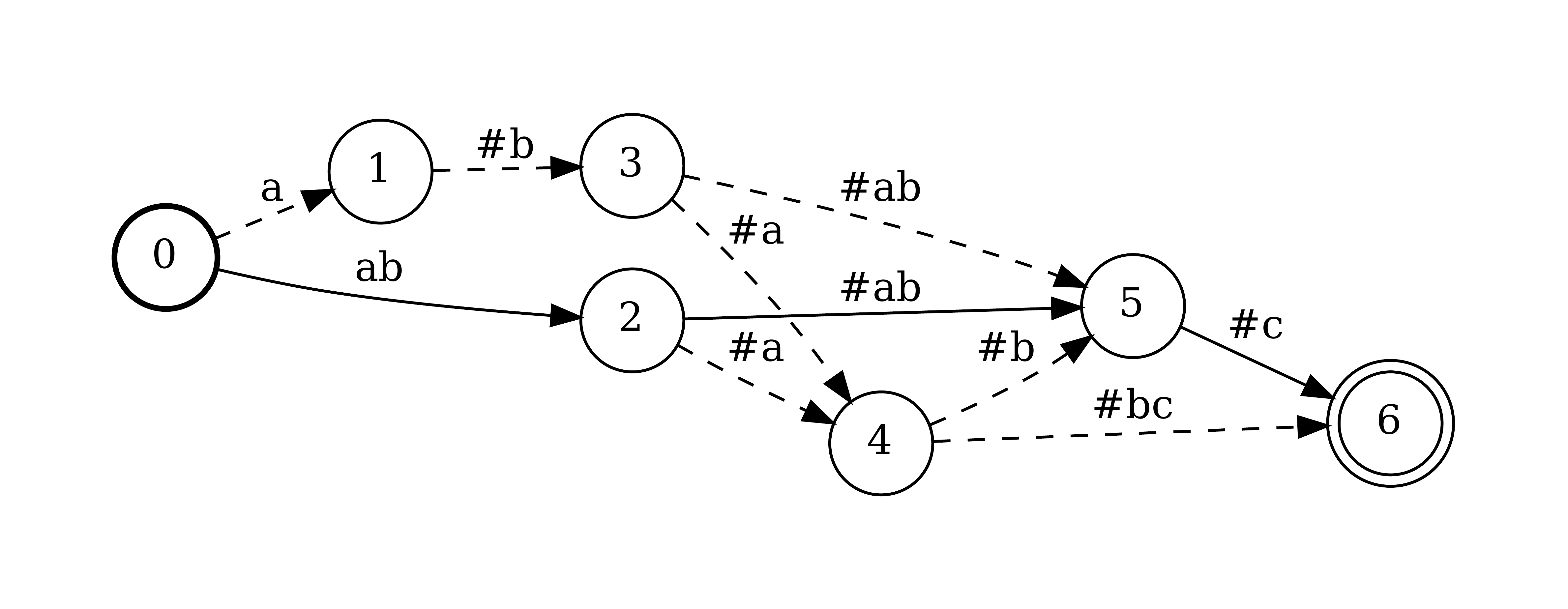}
        \vspace{-10mm}
        \caption{A lattice, $\mathcal{A} \circ \mathcal{T}$, of all possible tokenizations of \texttt{ababc}.}\label{fig:lattice}
    \end{subfigure}

    \caption{Uniformly sampling tokenizations from $\mathcal{A} \circ \mathcal{T}$.}
    \label{fig:sampling_example}
\end{figure}

\section{Uniformly Sampling Tokenizations}

Given a subword vocabulary, we first produce a character-to-subword finite-state transducer representing it. Encoding an input word as a linear finite-state automaton and composing it with this transducer produces a lattice which encodes all possible tokenizations. Since the word length is finite, this lattice must be acyclic, and we can sample paths from it using Algorithm \ref{alg:unbiased_dag} \cite{lavrovexchange}.
An example of the transducer construction, composition, and sampling is shown in \Cref{fig:sampling_example}.

Given a baseline BPE or MaxMatch tokenizer, we implement our uniform sampling tokenizer by constructing a subword transducer from its subword vocabulary and selecting a dropout probability $p$. 
During training, a word
is tokenized via uniform sampling with probability $p$ and via the deterministic tokenizer with probability $1-p$, as shown in Algorithm \ref{alg:uniform_sample_tokenizer}.

One of the reasons for the success of subword regularization is that they expose the model to a more diverse set of tokenizations (\Cref{sec:motivation}).
Figure \ref{fig:word_imbalance_div} shows that across any choice of $p$, even with far fewer samples, a much more diverse set of tokenizations for a given word is observed when using Uniform Sampling compared to Dropout, indicating that a model will be exposed to a far greater number of unique contexts during training. %

\begin{figure}
{
\hrule \vspace{1mm}
\small
\textbf{Inputs}: Sentence $s$, Tokenizer $T$, Probability $p$\newline
\textbf{Output}: Tokenized sequence $t \in \mathcal{V}^+$
\hrule
\begin{algorithmic}[1]
\State $t \gets \bm{[}~\bm{]}$
\For{$w \in s$}
    \If{$\textsc{Rand}() < p$}
        \State $t \xleftarrow[]{\text{extend}}\textsc{UniformSample}(w)$
    \Else
        \State $t \xleftarrow[]{\text{extend}}T(w)$
    \EndIf
\EndFor
\State \Return $t$
\end{algorithmic}
\hrule
}
\vspace{1mm}
\captionof{algorithmB}{%
    Uniform Sampling Tokenization.
}
\label{alg:uniform_sample_tokenizer}
\vspace{-5mm}
\end{figure}

\begin{figure}[htbp]
\includegraphics[width=\linewidth]{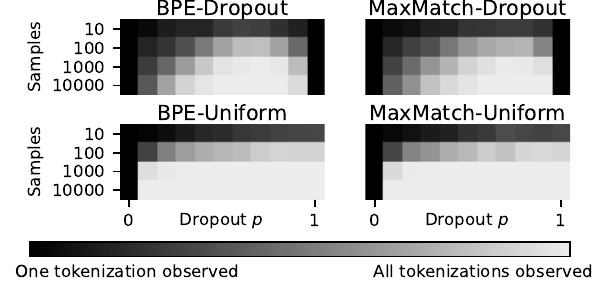}
\caption{The number of unique, observed tokenizations of a word with $N$ samples and dropout $p$.}
\label{fig:word_imbalance_div}
\end{figure}

\section{Experiments}

We use English$\leftrightarrow$German, English$\leftrightarrow$Romanian, and English$\leftrightarrow$French as our translation tasks. 
For each language pair, we train a baseline BPE and MaxMatch tokenizer with the same vocabulary size and use them to build Dropout and Uniform Sampler variants so that the vocabulary between a baseline tokenizer and its stochastic variants is exactly the same and only the tokenization distribution is different. We include a UnigramLM tokenizer with the same vocabulary size as a learned-distribution baseline.%
We use the same underlying transformer model (\Cref{apx:architecture}) for each language pair, and only change the embedding and decoding layers, according to the choice of tokenizer. We compare tokenizer efficiency (via  \href{https://github.com/zouharvi/tokenization-scorer}{\texttt{tokenization-scorer}}),
\textsc{BLEU} \citep{papineni-etal-2002-bleu,post-2018-call}, \textsc{chrF} \citep{popovic-2015-chrf}, and COMET\textsubscript{DA-22} \citep{rei-etal-2022-comet} by averaging the results of three experimental runs per model. 
We use $p=0.1$ for BPE-Dropout, $p=0.3$ for MaxMatch-Dropout, and $\alpha=0.3$ for UnigramLM. %
For Uniform Sampling, we use $p=0.1$ and $0.25$, which were chosen as an estimate of the frequency that a non-canonical tokenization of word in BPE- and MaxMatch-Dropout was sampled, respectively. Thus, we should expect Uniform Samplers to have roughly the same amount of non-canonically-tokenized-words in a corpus as BPE- and MaxMatch-Dropout, so the salient difference is the variety of tokenizations.

The results are shown in Table \ref{tab:results_sample}. In nearly every translation metric, Uniform Sampling outperforms BPE- and MaxMatch-Dropout. However, curiously, Uniform Sampling does not always have higher efficiency than BPE- or MaxMatch-Dropout (but is always higher than the baseline), as Uniform Sampling guarantees maximum entropy at the \textit{word}-tokenization level, which does not necessarily translate to the \textit{global}-tokenization entropy. 

There is only one metric (EN$\rightarrow$DE, BPE, \textsc{chrF}) where a Uniform Sampling model is not the best. However, in that same case, the Uniform Sampler improved upon the BPE-Dropout model by 0.8 BLEU, which is nearly as much as the BPE-Dropout improved upon the BPE baseline.
In addition, the +0.61 increase in COMET\textsubscript{DA-22} corresponds to a 82\% agreement accuracy with humans \cite{kocmi2024navigating}. In the EN$\rightarrow$RO pair, Uniform Sampling models were the best across all metrics and underlying tokenizers. Further, Unigram Sampling consistently outperforms UnigramLM both in terms of raw translation quality metrics and improvement over the deterministic baseline. These results extend to our full experimental results (Appendix \ref{apx:fullresults}) and support our hypothesis that an unbiased tokenizer should generally outperform biased dropout tokenizers.

\begin{table}[h]
        \begin{subtable}{\columnwidth}\centering
        {
        \resizebox{\columnwidth}{!}{%
            \begin{tabular}{l|cccc} \toprule
                Tokenizer & Efficiency & \textsc{BLEU} & \textsc{chrF} & \textsc{COMET} \\ \midrule
                BPE                         & 0.4636 & 28.44        & 55.80  & 76.22  \\
                BPE + Dropout ($p{=}0.1$)     & 0.4747 & 29.37        & \textbf{56.63}  & 77.51  \\
                BPE + Uniform ($p{=}0.1$)     & 0.4731 & 30.05        & 56.37  & \textbf{78.12}  \\
                BPE + Uniform ($p{=}0.25$)     & 0.4719 & \textbf{30.16}        & 56.47  & 78.08  \\ \midrule
                MaxMatch                         & 0.4584   & 28.41             & 55.97 & 76.57 \\
                MaxMatch + Dropout ($p{=}0.3$)     & 0.4530   & 29.13             & 56.43 & 77.38 \\
                MaxMatch + Uniform ($p{=}0.1$)     & 0.4657   & 29.18             & 56.43 & \textbf{77.76} \\
                MaxMatch + Uniform ($p{=}0.25$)     & 0.4633   & \textbf{29.43}    & \textbf{56.57} & 77.62 \\ \midrule
                Unigram ($\alpha{=}1$)     & 0.4452 & 28.40 & 55.93 & 76.66 \\ 
                Unigram ($\alpha{=}0.3$)   & 0.3796 & 28.97 & 56.33 & 77.44 \\ \bottomrule
        \end{tabular}}\caption{English$\rightarrow$German (source+target dropout)}}
        \end{subtable}
        \begin{subtable}{\columnwidth}\centering
        {
        \resizebox{\columnwidth}{!}{%
            \begin{tabular}{l|cccc} \toprule
                Tokenizer & Efficiency & \textsc{BLEU} & \textsc{chrF} & \textsc{COMET} \\ \midrule
                BPE                              & 0.4524       & 23.56 & 53.20 & 81.03  \\
                BPE + Dropout ($p{=}0.1$)        & 0.4614       & 23.98 & 53.70 & 81.90  \\
                BPE + Uniform ($p{=}0.1$)        & 0.4594       & 23.83 & 53.67 & 82.00  \\
                BPE + Uniform ($p{=}0.25$)       & 0.4647       & \textbf{24.13} & \textbf{53.73} & \textbf{82.20}  \\ \midrule
                MaxMatch                         & 0.4476       & 23.52 & 53.23 & 81.17 \\
                MaxMatch + Dropout ($p{=}0.3$)   & 0.4578       & 23.95 & 53.70 & 81.98 \\
                MaxMatch + Uniform ($p{=}0.1$)   & 0.4528       & \textbf{24.32} & \textbf{53.90} & \textbf{82.11} \\ 
                MaxMatch + Uniform ($p{=}0.25$)  & 0.4563       & 24.10 & 53.87 & 82.06 \\ \midrule
                Unigram ($\alpha{=}1$)                & 0.4338       & 23.68 & 53.37 & 81.28 \\
                Unigram ($\alpha{=}0.3$)              & 0.4284       & 24.17 & 53.87 & 82.00 \\\bottomrule
        \end{tabular}}\caption{English$\rightarrow$Romanian (source only dropout)}}
        \end{subtable}

\caption{
Experimental results for EN$\rightarrow$DE and EN$\rightarrow$RO. In each block, we compare a baseline tokenizer with its dropout and uniform sampling variants.
Each group has the same vocabulary and differs only in the tokenization distribution.
The best performing model for each baseline and metric is \textbf{bolded}.
The full results for all languages is in Appendix \ref{apx:fullresults}.
}\label{tab:results_sample}
\vspace{-2mm}
\end{table}

\section{Conclusion}

We investigate the distributions induced by BPE- and MaxMatch-Dropout, two popular subword regularization schemes.
We hypothesize and show that BPE- and MaxMatch-Dropout are suboptimal in that they form heavily \hlc[gray!40]{biased distributions}.
We introduce a Uniform Sampler tokenizer, which\linebreak \hlc[gray!40]{guarantees uniform distributions} and consistently outperforms BPE- and MaxMatch-Dropout on machine translation tasks.

\smallskip
\noindent
\textbf{Future work.}
Uniform Sampling is uniform at the \textit{word} level, but past research suggests that uniformity at the \textit{global} unigram level is desired.
Therefore, algorithms could be designed to directly optimize global uniformity.
Further investigations should reconcile how both Uniform Sampling and UnigramLM improve performance despite their opposing motivations (higher/lower entropy).

\section*{Limitations}

We did not establish statistical significance for our results, but note that the trend hold across language pairs, tokenizers, and metrics. We did not do substantial hyperparameter searching for vocabulary size or dropout rates, but rather used values that commonly appear in the literature. It is possible that some trends in our results may change with different choices of tokenization hyperparameters.

We also did not experiment with extremely-low resource settings (our smallest setting, EN$\leftrightarrow$DE has 150k sentence pairs), or very large settings (our largest, EN$\leftrightarrow$FR, has 2M sentence pairs). Additionally, in our largest case, the improvement seen by Uniform Sampling are less consistent and less significant. However, this is in line with prior research that shows the diminishing effectiveness of subword regularization as the corpus size increases.

\bibliography{custom}
\clearpage

\appendix

\section{Training Details}\label{apx:architecture}

We use \texttt{fairseq} \citep{ott-etal-2019-fairseq} for language modeling, HuggingFace's \href{https://github.com/huggingface/tokenizers}{\tt tokenizers} library for our underlying BPE and MaxMatch tokenizers, and OpenFst \cite{openfst} for the subword lattice construction. For UnigramLM, we used SentencePiece \cite{kudo-richardson-2018-sentencepiece}. We used \texttt{fairseq}'s \texttt{transformer\_iwslt\_de\_en} architecture for EN$\leftrightarrow$DE, and the baseline \texttt{transformer} architecture for EN$\leftrightarrow$RO and EN$\leftrightarrow$FR. The hyperparameters and optimizer configuration are given in Tables \ref{tab:iwslt_transformer}, \ref{tab:transformer}, and \ref{tab:optimizer}.
Our datasets were:
\begin{itemize}[left=0mm,noitemsep,topsep=0mm]
    \item EN$\leftrightarrow$DE: 160k, IWSLT14 \cite{iwslt14}
    \item EN$\leftrightarrow$RO: 600k, WMT16 \cite{wmt16}
    \item EN$\leftrightarrow$FR: 2M, Europarl \cite{europarl}
\end{itemize}

\begin{table}[!h]
\centering
        \resizebox{0.9\columnwidth}{!}{%
        \begin{tabular}{l|c} \toprule
        \makecell[l]{Vocabulary Sizes  (src, tgt)} & \makecell{EN$\leftrightarrow$DE: (10k, 10k)}\\
        Embedding Dimension   & 512 \\ 
        FFN Dimension         & 1024 \\ 
        Number of Heads        & 4    \\
        Number of Layers       & 6    \\ 
        Dropout                & 0.3  \\ \bottomrule
        \end{tabular}}\caption{The \texttt{transformer\_iwslt\_en\_de} architecture, used for the English$\leftrightarrow$German task.}\label{tab:iwslt_transformer}
\end{table}

\begin{table}[!h]
\centering
        \resizebox{0.9\columnwidth}{!}{%
        \begin{tabular}{l|c} \toprule
        \makecell[l]{Vocabulary Sizes  (src, tgt)} & \makecell{EN$\leftrightarrow$RO: (14k, 14k) \\ EN$\leftrightarrow$FR: (30k, 30k)} \\ 
        Embedding Dimension   & 512 \\ 
        FFN Dimension         & 2048 \\ 
        Number of Heads        & 6    \\
        Number of Layers       & 8    \\ 
        Dropout                & 0.1  \\ \bottomrule
        \end{tabular}}\caption{The \texttt{transformer} architecture, used for the English$\leftrightarrow$Romainan and English$\leftrightarrow$French tasks.}\label{tab:transformer}
\end{table}

\begin{table}[!h]
    \centering
        \begin{tabular}{ll} \toprule
        Optimizer & ADAM \\
        $\beta_1, \beta_2$     & $(0.9, 0.98)$ \\
        Learning Rate          & $5\times 10^{-4}$ \\
        Warmup                 & 4000 steps    \\
        Scheduler       & Inverse Square Root    \\
        Tokens-per-batch & 8192 \\
        Patience         & \makecell[l]{EN$\leftrightarrow$DE: 8 \\ EN$\leftrightarrow$RO: 10 \\ EN$\leftrightarrow$FR: 5} \\ \bottomrule
        \end{tabular}
        \caption{The optimizer parameters, used for all tasks.}\label{tab:optimizer}
\end{table}

\newpage

\section{Algorithms} \label{apx:algorithms}
\begin{figure}[!h]
{
\small
\hrule\vspace{1mm}
\textbf{Inputs}: Corpus $\mathcal{C}$, Alphabet $\Sigma$, Target size $n$,\newline
\textbf{Outputs}: Vocabulary $\mathcal{V}$, Merges $\mu$
\hrule
\begin{algorithmic}[1]
\State $\mathcal{V} \gets \Sigma$
\For{$i \in 1 \dots n$}
    \State $(x, y) \gets \argmax\limits_{a, b \in \mathcal{V}} \textsc{count}({(a, b), \mathcal{C}})$
    \State $\mathcal{V} \gets \mathcal{V} \cup \{ xy \}$
    \State $\mu \gets \mu \cup \langle (x, y) \rangle$
    \State $\mathcal{C} \gets \textsc{Replace}({(x, y) \rightarrow xy, \mathcal{C}})$
\EndFor
\State \Return $\mathcal{V}, \mu$\label{alg:training_return}
\end{algorithmic}
}
\hrule\medskip
\captionof{algorithmB}{%
    BPE Training.
}
\label{alg:bpe_initialization}
\end{figure}
\begin{figure}[!h]
    \small
    \hrule\smallskip
    \newcommand{\cur}{\ensuremath{\textsc{cur}}}
    \textbf{Inputs}: Directed Acyclic Graph $D$,\newline
    \textbf{Outputs}: Path $\pi$, Path-probability $p$
    \hrule
    \begin{algorithmic}[1]
            \State $\pi \gets \bm{[}~\bm{]}$
            \State $p \gets 1$
            \State $\cur \gets q_{start}$
            \While{$\cur$ is not final}
                \State $(w, q) \sim \textsc{Uniform}(\textsc{Adj}(\cur))$
                \State $\textsc{Append}(\pi, (\cur, w, q))$
                \State $p \gets p \times \frac{1}{\Call{Deg}{\cur}}$
                \State $\cur \gets q$
            \EndWhile
            \State \Return{$\pi, p$}
   \end{algorithmic}
   \hrule\medskip
\captionof{algorithmB}{Biased DAG Sampling.}\label{alg:biased_dag}
\end{figure}
\begin{figure}[!h]
    \small
    \hrule\smallskip
    \newcommand{\cur}{\ensuremath{\textsc{cur}}}
    \textbf{Inputs}: Directed Acyclic Graph $D$,\newline
    \textbf{Output}: Path $\pi$
    \hrule
    \begin{algorithmic}[1]
            \State $p_{min} = \prod\limits_{q \in D} \frac{1}{\textsc{deg}(q)}$

            \State $(\pi, p) \sim \textsc{BiasedSample}(D)$
            
            \While{$\Call{Rand}{\null} > \frac{p_{min}}{p}$}
                \State $(\pi, p) \sim \textsc{BiasedSample}(D)$
            \EndWhile
            \State \Return{$\pi$}
    \end{algorithmic}
    \hrule\medskip
\captionof{algorithmB}{Unbiased DAG Sampling.}\label{alg:unbiased_dag}
\end{figure}

\newpage

\section{Proofs}
\bpelemma*
\begin{proof}
Since $abb, bbc, abbc \notin \mathcal{V}$, then $(ab, b), (a, bb), (b, bc) \notin \mu$.
Consider the word $abbc$. There are 5 possible tokenizations $[a, b, b, c]$, $[a, b, bc]$, $[a, bb, c]$, $[ab, b, c]$, $[ab, bc]$. We proceed by case analysis and compute the probability of each, given a dropout probability $p$.
\begin{itemize}[left=0mm]
    \item $[a, b, b, c] \rightarrow p^3$
    \begin{itemize}
        \item Merges $(a, b), (b, b)$, and $(b, c)$ must be dropped which has probability $p{\times}p{\times}p{=}p^3$.
    \end{itemize}
    \item $[a, b, bc] \rightarrow p^2(1-p)$
    \begin{itemize}
        \item Merges $(a, b)$, and $(b, b)$ must be dropped, but not $(b, c)$.
    \end{itemize}
    \item $[a, bb, c] \rightarrow p(1-p)$
    \begin{itemize}
        \item Merge $(a, b)$ must be dropped but not $(b, b)$. The merge $(b, c)$ is irrelevant as $(b, b) >_{\mu} (b, c)$.
    \end{itemize}
    \item $[ab, b, c] \rightarrow (1-p)p$
    \item $[ab, bc] \rightarrow (1-p)^2$
\end{itemize}

To be uniform, $p^3 = p^2(1-p) = p(1-p) = (1-p)^2 = \frac{1}{5}$, which does not exist. Hence, there is no $p$ such that $\mathcal{B}'(abbc)$ is uniform.
\end{proof}

\mmlemma*
\begin{proof}
Consider the distribution of tokenizations $\mathcal{M}'(z)$, under which the probability of $z$ being the final tokenization is $(1-p)$.  Let the total number of tokenizations be $n$, and assume the distribution is unbiased. The probability of the tokenization $[v, y_1, y_2, \dots, y_k]$ is $(1-p)p^k$. Thus, the distribution is only unbiased if $(1-p) = (1-p)p^k = \frac{1}{n}$. Since there are at least $3$ tokenizations $[z], [z_1, z_2, \dots, z_n], \text{and } [v, y_1, y_2, \dots, y_k]$, this is a contradiction.
\end{proof}

\begin{figure}[htbp]
\includegraphics[width=\linewidth]{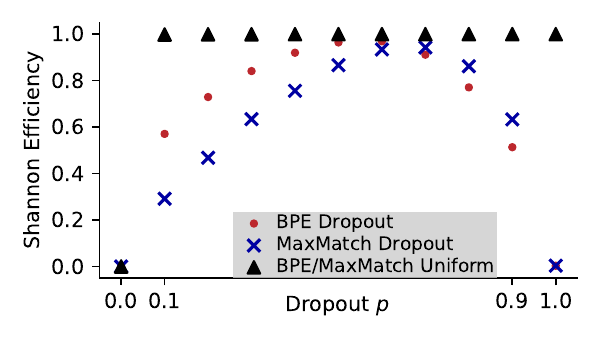}
\caption{Distribution uniformity measured by Shannon Efficiency (higher=more uniform; excludes the canonical form, which usually takes up most of the probability mass). Our Uniform Sampling versions (both for BPE and MaxMatch) \textit{guarantee} balanced sampling of tokenizations, which is not true for the standard Dropout versions whose balance depends non-linearly on the dropout rate $p$.}
\label{fig:word_imbalance_eff}
\end{figure}

\onecolumn
\section{Full Experiments}\label{apx:fullresults}
\begin{table*}[!h]
    \begin{subtable}{.5\linewidth}\centering
    {
        \resizebox{\columnwidth}{!}{%
            \begin{tabular}{l|cccc} \toprule
                Tokenizer & Efficiency & \textsc{BLEU} & \textsc{chrF} & \textsc{COMET} \\ \midrule
                BPE                         & 0.4636 & 33.66        & 57.10  & 79.30  \\
                BPE + Dropout ($p{=}0.1$)     & 0.4747 & 35.06        & 58.07  & 80.51  \\
                BPE + Uniform ($p{=}0.1$)     & 0.4731 & 35.03        & 57.97  & 80.46  \\
                BPE + Uniform ($p{=}0.25$)    & 0.4719 & \textbf{35.22}        & \textbf{58.13}  & \textbf{ 80.57}  \\ \midrule
                MaxMatch                           & 0.4584        & 33.85  & 57.17 & 79.48 \\
                MaxMatch + Dropout ($p{=}0.3$)      & 0.4530        & 34.92  & 57.87 & 80.37 \\
                MaxMatch + Uniform ($p{=}0.1$)      & 0.4657        & 35.17  & 58.10 & 80.60 \\
                MaxMatch + Uniform ($p{=}0.25$)       & 0.4633        & \textbf{35.32}  & \textbf{58.13} & \textbf{80.71} \\ \midrule
                Unigram ($\alpha{=}1$)     & 0.4452 & 33.37 & 56.77 & 79.43 \\ 
                Unigram ($\alpha{=}0.3$)   & 0.3796 & 34.24 & 57.70 & 80.31 \\ \bottomrule
        \end{tabular}}\caption{German$\rightarrow$English (source+target dropout)}}
    \end{subtable}
    \vspace{0.1cm}
    \begin{subtable}{.5\linewidth}\centering
    {
        \resizebox{\columnwidth}{!}{%
            \begin{tabular}{l|cccc} \toprule
                Tokenizer & Efficiency & \textsc{BLEU} & \textsc{chrF} & \textsc{COMET} \\ \midrule
                BPE                         & 0.4636 & 28.44        & 55.80  & 76.22  \\
                BPE + Dropout ($p{=}0.1$)     & 0.4747 & 29.37        & \textbf{56.63}  & 77.51  \\
                BPE + Uniform ($p{=}0.1$)     & 0.4731 & 30.05        & 56.37  & \textbf{78.12}  \\
                BPE + Uniform ($p{=}0.25$)     & 0.4719 & \textbf{30.16}        & 56.47  & 78.08  \\ \midrule
                MaxMatch                         & 0.4584   & 28.41             & 55.97 & 76.57 \\
                MaxMatch + Dropout ($p{=}0.3$)     & 0.4530   & 29.13             & 56.43 & 77.38 \\
                MaxMatch + Uniform ($p{=}0.1$)     & 0.4657   & 29.18             & 56.43 & \textbf{77.76} \\
                MaxMatch + Uniform ($p{=}0.25$)     & 0.4633   & \textbf{29.43}    & \textbf{56.57} & 77.62 \\ \midrule
                Unigram ($\alpha{=}1$)     & 0.4452 & 28.40 & 55.93 & 76.66 \\ 
                Unigram ($\alpha{=}0.3$)   & 0.3796 & 28.97 & 56.33 & 77.44 \\ \bottomrule
        \end{tabular}}\caption{English$\rightarrow$German (source+target dropout)}}
    \end{subtable}
    \begin{subtable}{.5\linewidth}\centering
    {
        \resizebox{\columnwidth}{!}{%
            \begin{tabular}{l|cccc} \toprule
                Tokenizer & Efficiency & \textsc{BLEU} & \textsc{chrF} & \textsc{COMET} \\ \midrule
                BPE                              & 0.4034       & 40.86 & 64.57 & 86.39  \\
                BPE + Dropout ($p{=}0.1$)        & 0.4137       & 40.96 & 64.57 & 86.50  \\
                BPE + Uniform ($p{=}0.1$)        & 0.4139       & \textbf{41.10} & \textbf{64.70} & \textbf{86.52}  \\
                BPE + Uniform ($p{=}0.25$)       & 0.4259       & 40.86 & 64.57 & 86.36  \\ \midrule
                MaxMatch                         & 0.4003       & 41.02 & \textbf{64.70} & 86.48 \\
                MaxMatch + Dropout ($p{=}0.3$)   & 0.4186       & 40.88 & 64.57 & 86.47 \\
                MaxMatch + Uniform ($p{=}0.1$)   & 0.4094       & \textbf{41.04} & \textbf{64.70} & \textbf{86.54} \\ 
                MaxMatch + Uniform ($p{=}0.25$)  & 0.4194       & 40.80 & 64.50 & 86.38 \\ \midrule
                Unigram ($\alpha{=}1$)                & 0.3801       & 40.59 & 64.43 & 86.27 \\
                Unigram ($\alpha{=}0.3$)              & 0.3773       & 40.71 & 64.53 & 86.36 \\\bottomrule
        \end{tabular}}\caption{French$\rightarrow$English (source only dropout)}}
    \end{subtable}
    \begin{subtable}{.5\linewidth}\centering
    {
        \resizebox{\columnwidth}{!}{%
            \begin{tabular}{l|cccc} \toprule
                Tokenizer & Efficiency & \textsc{BLEU} & \textsc{chrF} & \textsc{COMET} \\ \midrule
                BPE                              & 0.4034       & 41.27 & 65.77 & 86.86  \\
                BPE + Dropout ($p{=}0.1$)        & 0.4137       & 41.45 & 65.90 & \textbf{87.08}  \\
                BPE + Uniform ($p{=}0.1$)        & 0.4139       & \textbf{41.54} & \textbf{65.93} & 87.04  \\
                BPE + Uniform ($p{=}0.25$)       & 0.4259       & 41.35 & 65.83 & 87.03  \\ \midrule
                MaxMatch                         & 0.4003       & 41.38 & 65.87 & 87.00 \\
                MaxMatch + Dropout ($p{=}0.3$)   & 0.4186       & 41.24 & 65.80 & 86.95 \\
                MaxMatch + Uniform ($p{=}0.1$)   & 0.4094       & \textbf{41.44} & \textbf{65.93} & \textbf{87.07} \\ 
                MaxMatch + Uniform ($p{=}0.25$)  & 0.4194       & 41.22 & 65.77 & 86.93 \\ \midrule
                Unigram ($\alpha{=}1$)                & 0.3801       & 40.47 & 65.27 & 86.36 \\
                Unigram ($\alpha{=}0.3$)              & 0.3773       & 40.15 & 65.00 & 86.08 \\\bottomrule
        \end{tabular}}\caption{English$\rightarrow$French (source only dropout)}}
    \end{subtable}
    \begin{subtable}{.5\linewidth}\centering
    {
        \resizebox{\columnwidth}{!}{%
            \begin{tabular}{l|cccc} \toprule
                Tokenizer & Efficiency & \textsc{BLEU} & \textsc{chrF} & \textsc{COMET} \\ \midrule
                BPE                              & 0.4524       & 30.81 & 57.30 & 77.77  \\
                BPE + Dropout ($p{=}0.1$)        & 0.4614       & \textbf{32.13} & 58.17 & 79.50  \\
                BPE + Uniform ($p{=}0.1$)        & 0.4594       & 31.92 & 58.13 & \textbf{79.64}  \\
                BPE + Uniform ($p{=}0.25$)       & 0.4647       & 31.85 & \textbf{58.23} & 79.54  \\ \midrule
                MaxMatch                         & 0.4476       & 31.01 & 57.23 & 78.03 \\
                MaxMatch + Dropout ($p{=}0.3$)   & 0.4578       & 31.90 & 58.13 & 79.63 \\
                MaxMatch + Uniform ($p{=}0.1$)   & 0.4528       & \textbf{32.02} & \textbf{58.37} & \textbf{79.81} \\ 
                MaxMatch + Uniform ($p{=}0.25$)  & 0.4563       & 31.83 & 58.33 & 79.74 \\ \midrule
                Unigram ($\alpha{=}1$)                & 0.4338       & 30.34 & 56.97 & 77.74 \\
                Unigram ($\alpha{=}0.3$)              & 0.4284       & 31.53 & 58.07 & 79.40 \\\bottomrule
        \end{tabular}}\caption{Romanian$\rightarrow$English (source only dropout)}}
    \end{subtable}
    \vspace{0.1cm}
    \begin{subtable}{.5\linewidth}\centering
    {
        \resizebox{\columnwidth}{!}{%
            \begin{tabular}{l|cccc} \toprule
                Tokenizer & Efficiency & \textsc{BLEU} & \textsc{chrF} & \textsc{COMET} \\ \midrule
                BPE                              & 0.4524       & 23.56 & 53.20 & 81.03  \\
                BPE + Dropout ($p{=}0.1$)        & 0.4614       & 23.98 & 53.70 & 81.90  \\
                BPE + Uniform ($p{=}0.1$)        & 0.4594       & 23.83 & 53.67 & 82.00  \\
                BPE + Uniform ($p{=}0.25$)       & 0.4647       & \textbf{24.13} & \textbf{53.73} & \textbf{82.20}  \\ \midrule
                MaxMatch                         & 0.4476       & 23.52 & 53.23 & 81.17 \\
                MaxMatch + Dropout ($p{=}0.3$)   & 0.4578       & 23.95 & 53.70 & 81.98 \\
                MaxMatch + Uniform ($p{=}0.1$)   & 0.4528       & \textbf{24.32} & \textbf{53.90} & \textbf{82.11} \\ 
                MaxMatch + Uniform ($p{=}0.25$)  & 0.4563       & 24.10 & 53.87 & 82.06 \\ \midrule
                Unigram ($\alpha{=}1$)                & 0.4338       & 23.68 & 53.37 & 81.28 \\
                Unigram ($\alpha{=}0.3$)              & 0.4284       & 24.17 & 53.87 & 82.00 \\\bottomrule
        \end{tabular}}\caption{English$\rightarrow$Romanian (source only dropout)}}
    \end{subtable}
    \bigskip
    \begin{subtable}{.5\linewidth}\centering
    {
        \resizebox{\columnwidth}{!}{%
            \begin{tabular}{l|cccc} \toprule
                Tokenizer & Efficiency & \textsc{BLEU} & \textsc{chrF} & \textsc{COMET} \\ \midrule
                BPE                              & 0.4524       & 30.81 & 57.30 & 77.77  \\
                BPE + Dropout ($p{=}0.1$)        & 0.4672       & 32.48 & 58.87 & 80.29  \\
                BPE + Uniform ($p{=}0.1$)        & 0.4615       & 32.46 & 58.87 & 80.26  \\
                BPE + Uniform ($p{=}0.25$)       & 0.4562       & \textbf{32.83} & \textbf{59.07} & \textbf{80.71}  \\ \midrule
                MaxMatch                         & 0.4476       & 31.01 & 57.23 & 78.03 \\
                MaxMatch + Dropout ($p{=}0.3$)   & 0.4465       & 32.89 & 59.03 & \textbf{80.69} \\
                MaxMatch + Uniform ($p{=}0.1$)   & 0.4544       & 32.83 & 58.97 & 80.36 \\ 
                MaxMatch + Uniform ($p{=}0.25$)  & 0.4484       & \textbf{33.03} & \textbf{59.13} & 80.65 \\ \midrule
                Unigram ($\alpha{=}1$)                & 0.4338       & 30.29 & 57.07 & 0.7774 \\
                Unigram ($\alpha{=}0.3$)              & 0.4061       & 32.30 & 58.70 & 0.8011 \\ \bottomrule
        \end{tabular}}\caption{Romanian$\rightarrow$English (source+target dropout)}}
    \end{subtable}
    \vspace{0.1cm}
    \begin{subtable}{.5\linewidth}\centering
    {
        \resizebox{\columnwidth}{!}{%
            \begin{tabular}{l|cccc} \toprule
                Tokenizer & Efficiency & \textsc{BLEU} & \textsc{chrF} & \textsc{COMET} \\ \midrule
                BPE                              & 0.4524       & 23.56 & 53.20 & 81.03  \\
                BPE + Dropout ($p{=}0.1$)        & 0.4672       & \textbf{24.85} & \textbf{54.30} & 82.71  \\
                BPE + Uniform ($p{=}0.1$)        & 0.4615       & 24.78 & \textbf{54.30} & \textbf{83.03}  \\
                BPE + Uniform ($p{=}0.25$)       & 0.4562       & 24.77 & 54.00 & 82.67  \\ \midrule
                MaxMatch                         & 0.4476       & 23.52 & 53.23 & 81.17 \\
                MaxMatch + Dropout ($p{=}0.3$)   & 0.4465       & 25.02 & 54.20 & 82.67 \\
                MaxMatch + Uniform ($p{=}0.1$)   & 0.4544       & 24.77 & \textbf{54.40} & \textbf{83.08} \\ 
                MaxMatch + Uniform ($p{=}0.25$)  & 0.4484       & \textbf{25.16} & 54.33 & 83.00 \\ \midrule
                Unigram ($\alpha{=}1$)                & 0.4338       & 23.61 & 53.23 & 0.8114 \\
                Unigram ($\alpha{=}0.3$)              & 0.4061       & 24.61 & 54.13 & 0.8274 \\ \bottomrule
        \end{tabular}}\caption{English$\rightarrow$Romanian (source+target dropout)}}
    \end{subtable}
    \caption{The main results of machine translation performance (average across 3 seeds). In almost all cases the Uniform sampling yields the best results. }
    \label{tab:main_results}
\end{table*}
\newpage

\section{Examples of Distributions from Data}\label{apx:tokenizations}
\begin{table}[htbp]
\centering
\resizebox{\linewidth}{!}{
\begin{tabular}{>{\hspace{-1mm}\small}p{2cm}<{\hspace{-1mm}}r}
\toprule
\multicolumn{2}{c}{\small \bf \makecell{BPE-Dropout $p{=}0.1$}} \\
\midrule
\texttt{something}       & 96.50\% \\
\texttt{some thing}      & 1.60\% \\
\texttt{so met hing}     & 1.52\% \\
\texttt{so m et hing}    & 0.16\% \\
\texttt{so me thing}     & 0.09\% \\
\texttt{som eth ing}     & 0.03\% \\
\texttt{s ome thing}     & 0.03\% \\
\texttt{so m eth ing}    & 0.03\% \\
\texttt{somet hing}      & 0.01\% \\
\texttt{some th ing}     & 0.00\% \\
\bottomrule
\end{tabular}
\begin{tabular}{>{\hspace{-1mm}\small}p{2cm}<{\hspace{-1mm}}>{\hspace{-2mm}}r}
\toprule
\multicolumn{2}{c}{\small \bf \makecell{MaxMatch-Dropout $p{=}0.3$}} \\
\midrule
\texttt{something}       & 69.85\% \\
\texttt{somet hing}      & 14.79\% \\
\texttt{somet hi n g}    & 4.54\% \\
\texttt{some thing}      & 4.35\% \\
\texttt{somet h ing}     & 1.31\% \\
\texttt{some th ing}     & 0.93\% \\
\texttt{som eth ing}     & 0.91\% \\
\texttt{some t hing}     & 0.42\% \\
\texttt{somet h in g}    & 0.38\% \\
\texttt{some th in g}    & 0.29\% \\
\bottomrule
\end{tabular}
\hspace{5mm}
\begin{tabular}{>{\hspace{-1mm}\small}p{2cm}<{\hspace{-1mm}}r}
\toprule
\multicolumn{2}{c}{\small \bf \makecell{BPE-Dropout $p{=}0.1$}} \\
\midrule
\texttt{started}         & 97.56\% \\
\texttt{star ted}        & 2.15\% \\
\texttt{star te d}       & 0.11\% \\
\texttt{start ed}        & 0.07\% \\
\texttt{st ar ted}       & 0.05\% \\
\texttt{st art ed}       & 0.02\% \\
\texttt{s ta r ted}      & 0.02\% \\
\texttt{star t ed}       & 0.01\% \\
\texttt{s ta r t ed}     & 0.00\% \\
\texttt{s t art ed}      & 0.00\% \\
\bottomrule
\end{tabular}
\begin{tabular}{>{\hspace{-1mm}\small}p{2cm}<{\hspace{-1mm}}>{\hspace{-2mm}}r}
\toprule
\multicolumn{2}{c}{\small \bf \makecell{MaxMatch-Dropout $p{=}0.3$}} \\
\midrule
\texttt{started}         & 69.97\% \\
\texttt{start ed}        & 14.88\% \\
\texttt{start e d}       & 6.21\% \\
\texttt{star ted}        & 4.37\% \\
\texttt{star te d}       & 1.33\% \\
\texttt{st art ed}       & 0.91\% \\
\texttt{star t ed}       & 0.42\% \\
\texttt{s ta r ted}      & 0.39\% \\
\texttt{st art e d}      & 0.39\% \\
\texttt{st ar ted}       & 0.26\% \\
\bottomrule
\end{tabular}
}

\bigskip

\resizebox{\linewidth}{!}{
\begin{tabular}{>{\hspace{-1mm}\small}p{2cm}<{\hspace{-1mm}}r}
\toprule
\multicolumn{2}{c}{\small \bf \makecell{BPE-Dropout $p{=}0.1$}} \\
\midrule
\texttt{percent}         & 73.54\% \\
\texttt{per c ent}       & 8.83\% \\
\texttt{per cent}        & 8.12\% \\
\texttt{perce nt}        & 7.84\% \\
\texttt{per ce nt}       & 0.88\% \\
\texttt{p er c ent}      & 0.18\% \\
\texttt{p er ce n t}     & 0.12\% \\
\texttt{p er cent}       & 0.11\% \\
\texttt{pe r cent}       & 0.10\% \\
\texttt{per ce n t}      & 0.09\% \\
\bottomrule
\end{tabular}
\begin{tabular}{>{\hspace{-1mm}\small}p{2cm}<{\hspace{-1mm}}>{\hspace{-2mm}}r}
\toprule
\multicolumn{2}{c}{\small \bf \makecell{MaxMatch-Dropout $p{=}0.3$}} \\
\midrule
\texttt{percent}         & 69.93\% \\
\texttt{perce nt}        & 14.61\% \\
\texttt{perce n t}       & 6.40\% \\
\texttt{per cent}        & 4.53\% \\
\texttt{pe r cent}       & 1.33\% \\
\texttt{per ce nt}       & 0.90\% \\
\texttt{per ce n t}      & 0.39\% \\
\texttt{p er cent}       & 0.38\% \\
\texttt{per c ent}       & 0.37\% \\
\texttt{pe r ce nt}      & 0.26\% \\
\bottomrule
\end{tabular}
\hspace{5mm}
\begin{tabular}{>{\hspace{-1mm}\small}p{2cm}<{\hspace{-1mm}}r}
\toprule
\multicolumn{2}{c}{\small \bf \makecell{BPE-Dropout $p{=}0.1$}} \\
\midrule
\texttt{different}       & 82.44\% \\
\texttt{dif fe rent}     & 8.74\% \\
\texttt{diff ere nt}     & 8.05\% \\
\texttt{differ ent}      & 0.29\% \\
\texttt{dif fe re nt}    & 0.20\% \\
\texttt{dif f ere nt}    & 0.13\% \\
\texttt{dif fer ent}     & 0.07\% \\
\texttt{d iff ere nt}    & 0.03\% \\
\texttt{d if fer ent}    & 0.03\% \\
\texttt{di ff er ent}    & 0.01\% \\
\bottomrule
\end{tabular}
\begin{tabular}{>{\hspace{-1mm}\small}p{2cm}<{\hspace{-1mm}}>{\hspace{-2mm}}r}
\toprule
\multicolumn{2}{c}{\small \bf \makecell{MaxMatch-Dropout $p{=}0.3$}} \\
\midrule
\texttt{different}       & 69.95\% \\
\texttt{differ ent}      & 14.74\% \\
\texttt{differ en t}     & 4.44\% \\
\texttt{diff ere nt}     & 3.05\% \\
\texttt{differ e nt}     & 1.36\% \\
\texttt{diff ere n t}    & 1.27\% \\
\texttt{dif fer ent}     & 0.94\% \\
\texttt{diff er ent}     & 0.94\% \\
\texttt{differ e n t}    & 0.56\% \\
\texttt{diff e rent}     & 0.41\% \\
\bottomrule
\end{tabular}
}

\bigskip

\resizebox{\linewidth}{!}{
\begin{tabular}{>{\hspace{-1mm}\small}p{2cm}<{\hspace{-1mm}}r}
\toprule
\multicolumn{2}{c}{\small \bf \makecell{BPE-Dropout $p{=}0.1$}} \\
\midrule
\texttt{together}        & 88.88\% \\
\texttt{to ge ther}      & 9.94\% \\
\texttt{to get her}      & 0.95\% \\
\texttt{tog ether}       & 0.08\% \\
\texttt{to ge t her}     & 0.07\% \\
\texttt{to g ether}      & 0.03\% \\
\texttt{to ge th er}     & 0.03\% \\
\texttt{tog e ther}      & 0.00\% \\
\texttt{t og ether}      & 0.00\% \\
\texttt{tog eth er}      & 0.00\% \\
\bottomrule
\end{tabular}
\begin{tabular}{>{\hspace{-1mm}\small}p{2cm}<{\hspace{-1mm}}>{\hspace{-2mm}}r}
\toprule
\multicolumn{2}{c}{\small \bf \makecell{MaxMatch-Dropout $p{=}0.3$}} \\
\midrule
\texttt{together}        & 69.99\% \\
\texttt{tog ether}       & 14.68\% \\
\texttt{tog eth er}      & 3.12\% \\
\texttt{to get her}      & 3.05\% \\
\texttt{t og ether}      & 1.35\% \\
\texttt{tog eth e r}     & 1.33\% \\
\texttt{to ge ther}      & 0.95\% \\
\texttt{tog et her}      & 0.94\% \\
\texttt{to get he r}     & 0.90\% \\
\texttt{t o get her}     & 0.41\% \\
\bottomrule
\end{tabular}
\hspace{5mm}
\begin{tabular}{>{\hspace{-1mm}\small}p{2cm}<{\hspace{-1mm}}r}
\toprule
\multicolumn{2}{c}{\small \bf \makecell{BPE-Dropout $p{=}0.1$}} \\
\midrule
\texttt{happening}       & 95.11\% \\
\texttt{happen ing}      & 1.93\% \\
\texttt{ha pp ening}     & 1.71\% \\
\texttt{happ ening}      & 0.91\% \\
\texttt{ha pp en ing}    & 0.16\% \\
\texttt{happ en ing}     & 0.08\% \\
\texttt{h app en ing}    & 0.03\% \\
\texttt{happ e ning}     & 0.03\% \\
\texttt{ha pp e ning}    & 0.02\% \\
\texttt{h app ening}     & 0.01\% \\
\bottomrule
\end{tabular}
\begin{tabular}{>{\hspace{-1mm}\small}p{2cm}<{\hspace{-1mm}}>{\hspace{-2mm}}r}
\toprule
\multicolumn{2}{c}{\small \bf \makecell{MaxMatch-Dropout $p{=}0.3$}} \\
\midrule
\texttt{happening}       & 70.05\% \\
\texttt{happen ing}      & 14.80\% \\
\texttt{happen in g}     & 4.46\% \\
\texttt{happ ening}      & 4.26\% \\
\texttt{happen i n g}    & 1.83\% \\
\texttt{ha pp ening}     & 0.94\% \\
\texttt{happ en ing}     & 0.89\% \\
\texttt{h app ening}     & 0.44\% \\
\texttt{happ e ning}     & 0.40\% \\
\texttt{ha p pe ning}    & 0.27\% \\
\bottomrule
\end{tabular}
}

\captionof{example}{
 Frequencies of tokenizations of several words sampled from BPE-Dropout (with $p = 0.1$) and MaxMatch-Dropout (with $p = 0.3$). The top row in each is the canonical tokenization.
}
\label{ex:bpe_dropout_20_full}
\end{table}

\appendix

\onecolumn

\end{document}